\title{Analytic Properties of Trackable Weak Models}
\author{Mark Chilenski, George Cybenko~\IEEEmembership{Fellow,~IEEE,} Isaac Dekine, Piyush Kumar, and Gil Raz\IEEEcompsocitemizethanks{\IEEEcompsocthanksitem M.~Chilenski, I.~Dekine, P.~Kumar, and G.~Raz are with Systems \& Technology Research. \IEEEcompsocthanksitem G.~Cybenko is with Dartmouth College. \IEEEcompsocthanksitem Preprint of paper to be submitted to \emph{IEEE Transactions on Network Science and Engineering.} \IEEEcompsocthanksitem Copyright \textcopyright\ 2020 Systems \& Technology Research LLC}}
\newcommand{\bluecircle}{\raisebox{-1pt}{\includegraphics{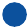}}\xspace}
\newcommand{\redsquare}{\raisebox{-1pt}{\includegraphics{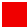}}\xspace}
\newcommand{\greendiamond}{\raisebox{-2.1pt}{\includegraphics{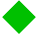}}\xspace}
\newcommand{\orangetriangle}{\raisebox{-1pt}{\includegraphics{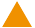}}\xspace}
\newcommand{\purplerect}{\raisebox{-1pt}{\includegraphics{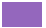}}\xspace}
\newcommand{\Aut}{\mathrm{Aut}}
\newcolumntype{C}{>{$}c<{$}}
\newtheorem{theorem}{Theorem}
\newtheorem{corollary}[theorem]{Corollary}
\begin{document}
\maketitle

\begin{abstract}
	We present several new results on the feasibility of inferring the hidden states in strongly-connected trackable weak models. Here, a weak model is a directed graph in which each node is assigned a set of colors which may be emitted when that node is visited.  A hypothesis is a node sequence which is consistent with a given color sequence.  A weak model is said to be trackable if the worst case number of such hypotheses grows as a polynomial in the sequence length. We show that the number of hypotheses in strongly-connected trackable models is bounded by a constant and give an expression for this constant. We also consider the problem of reconstructing which branch was taken at a node with same-colored out-neighbors, and show that it is always eventually possible to identify which branch was taken if the model is strongly connected and trackable. We illustrate these properties by assigning transition probabilities and employing standard tools for analyzing Markov chains. In addition, we present new results for the entropy rates of weak models according to whether they are trackable or not. These theorems indicate that the combination of trackability and strong connectivity dramatically simplifies the task of reconstructing which nodes were visited. This work has implications for any problem which can be described in terms of an agent traversing a colored graph, such as the reconstruction of hidden states in a hidden Markov model (HMM).
\end{abstract}

\begin{IEEEkeywords}
	Graph theory, graph labeling, Markov processes, hidden Markov models, weak models, tracking.
\end{IEEEkeywords}

\section{Introduction}
\IEEEPARstart{C}{onsider} an agent traversing a directed graph whose nodes, upon being visited, emit a color taken from a set of possibilities.
We are interested in identifying the nodes visited by the agent based on the colors observed.
If there are probabilities associated with the edges and color emissions then the system described is a \emph{hidden Markov model (HMM)} \cite{RabinerIEEE1989}, otherwise it is a \emph{weak model} \cite{CrespiACM2008}.
Weak models are used in a variety of applications including target tracking in sensor networks \cite{CrespiACM2008} and cybersecurity \cite{ChilenskiSPIE2018}.

This paper concerns itself only with the reconstruction of the node sequence given the observed color sequence.
We assume that the graph and its coloring are known \emph{a priori}, as we are not studying the more challenging task of inferring the graph and its coloring from observed sequences of colors \cite{kearns1994cryptographic}.

In general, no guarantees can be made about the ability to reconstruct the sequence of nodes visited.
In our previous work, we laid out a taxonomy of observability classes for which different guarantees can be made \cite{ChilenskiTaxon2018}.
In particular, \emph{trackable} models are weak models for which the number of node sequences (the ``hypotheses'') consistent with an observed color sequence is bounded by a polynomial in the sequence length \cite{CrespiACM2008}.
For models which are not trackable, the growth rate is exponential.
While polynomial growth is certainly better than exponential growth, a trackable model can still give rise to an unbounded number of hypotheses, which suggests that trackability is not a sufficient condition for accurate reconstruction of the node sequence.

In this paper, however, we present several new results which show that the node sequence reconstruction problem for trackable models which are strongly connected is much more feasible than in the more general case with transient nodes.
As strong connectivity is a reasonable assumption when observing the steady state behavior of a system, these results suggest that the combination of trackability with strong connectivity can be sufficient to enable accurate reconstruction of the node sequence.
This has important implications for the design of sensor networks and tracking systems, as trackability imposes much weaker constraints on the model's coloring than the observability classes with stronger guarantees such as unifilar \cite{ShengIEEE2005} and observable \cite{JungersDAM2011}.

The structure of this paper is as follows: after establishing the necessary background and notation in Section~\ref{sec:bg}, we show in Section~\ref{sec:bounded} that the number of hypotheses for strongly-connected trackable models is bounded by a constant.
Then in Section~\ref{sec:recon} we show that, for such models, it is always eventually possible to reconstruct which branch was taken at a point where there are multiple out-neighbors capable of emitting the same color.
We illustrate how to quantify what ``eventually'' means using the mean recurrence time and find that the results agree well with a simulation.
Section~\ref{sec:maxSize} provides an upper bound on the size of the hypothesis set for strongly-connected trackable models.
Section~\ref{sec:entropy} presents new results about the entropy rates of possible node sequences given observation sequences.
Section~\ref{sec:conc} summarizes these results.

\section{Background: Weak Models and Trackability}
\label{sec:bg}
A \emph{weak model} $G = (V, E, L, \Phi)$ consists of a set of nodes $V$ (the states a system can transition between), a set of edges $E$ consisting of ordered pairs of nodes (the allowed state transitions), a set of possible colors $\Phi$ (called the ``symbols'' in other work on weak models) which can be emitted, and a mapping $L:V\to2^\Phi$ which indicates which subset of $\Phi$ can be emitted by a given node \cite{CrespiACM2008}.
This can be seen as a \emph{node-multi-colored directed graph}, and is a non-probabilistic version of a \emph{hidden Markov model} (HMM) with discrete symbols \cite{RabinerIEEE1989}.
We restrict our attention to graphs with a finite number of nodes.

Any multi-colored model can be transformed to an equivalent single-colored model (i.e., a model for which $|L(v)|=1$ $\forall v\in V$) by replacing each multi-colored node with multiple nodes, one for each color, as shown in Figure~\ref{fig:mcTrans}.
\begin{figure}
	\centering
	\subfloat[Multi-colored]{\includegraphics[scale=0.15]{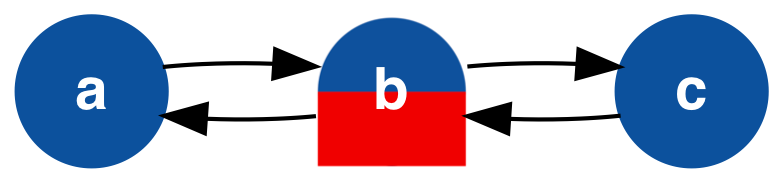}}
	\subfloat[Single-colored]{\includegraphics[scale=0.15]{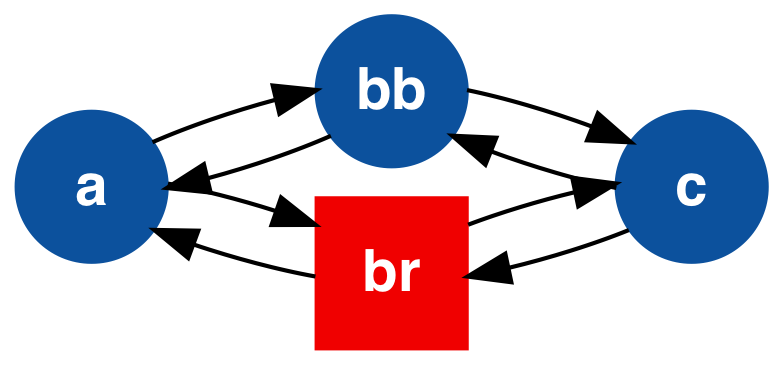}}
	\caption{Transformation of a multi-colored model into a single-colored model. Node $b$ can emit either \protect\bluecircle or \protect\redsquare, so it is split into nodes $bb$ and $br$, respectively. (This paper will use both color and shape to distinguish the ``colors'' which can be emitted by a given node.)}
	\label{fig:mcTrans}
\end{figure}
We will therefore assume, without loss of generality, that all models in this paper are single-colored.
In this case, we can replace $L$ with the function $\ell: V\to \Phi$, which is equivalent to the \emph{lumping function} in \cite{GurvitsLAA2005,geiger2014lumpings}.
An HMM with single-colored nodes is also known as a \emph{lumped Markov chain}.

Consider the problem of tracking a system described by such a model.
The structure of the graph is known, as is the sequence of colors emitted as the system transitions between the various states.
The goal is to reconstruct the sequence of nodes visited.
In general, no guarantees can be made about the accuracy of this inference problem.
As mentioned above, however, there is a specific class of weak models called \emph{trackable} weak models for which the number of hypotheses consistent with an observed color sequence is bounded by a polynomial in the sequence length \cite{CrespiACM2008}.
Formally, let $\mathcal{H}_G(Y_{[t]})$ be the set of node sequences which are consistent with the color sequence $Y_{[t]}=(Y_1, \dots, Y_t)$.
We refer to any such node sequence as a \emph{hypothesis}.
Let $n_G(t)=\max_{Y_{[t]}}|\mathcal{H}_G(Y_{[t]})|$ be the worst-case number of hypotheses for a color sequence of length $t$.
A trackable model is defined to be a model for which $n_G(t)=O(t^k)$ for some $k\geq0$.
Models which are not trackable must have exponential hypothesis growth rates: $n_G(t)=\Omega(2^{ct})$ for some $c>0$.

The original work on trackable models characterizes trackability both in terms of the joint spectral radius of the transition matrices defining the model as well as by considering the node sequences which are consistent with all possible color sequences.
As we describe in another publication, trackable models are also characterized by the absence of \emph{intersecting} cycles which permit the same color sequence \cite{ChilenskiTaxon2018}.
Two cycles $\pi_1, \pi_2 : \mathbb{Z}\to V$ indexed by $i$ are said to be intersecting if there is at least one $i$ such that $\pi_1(i)=\pi_2(i)$ and are said to permit the same color sequence if $\ell(\pi_1(i))= \ell(\pi_2(i))$ $\forall i$.\footnote{The equivalent definition of permitting the same color sequence for a multi-colored model is $L(\pi_1(i))\cap L(\pi_2(i))\neq\emptyset$ $\forall i$.}

Examples of trackable and untrackable models are given in Figure~\ref{fig:examples}.
\begin{figure}
	\centering
	\subfloat[Untrackable]{\includegraphics[scale=0.15]{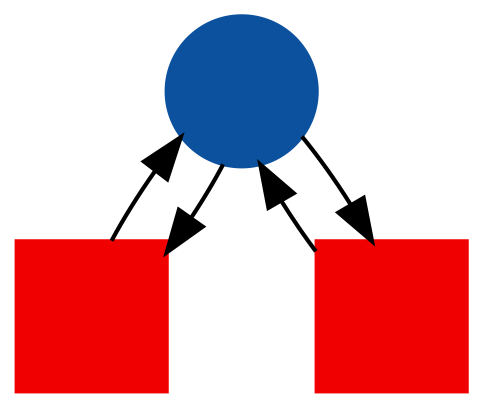}\label{sf:untrack}}
	\quad
	\subfloat[Trackable]{\includegraphics[scale=0.15]{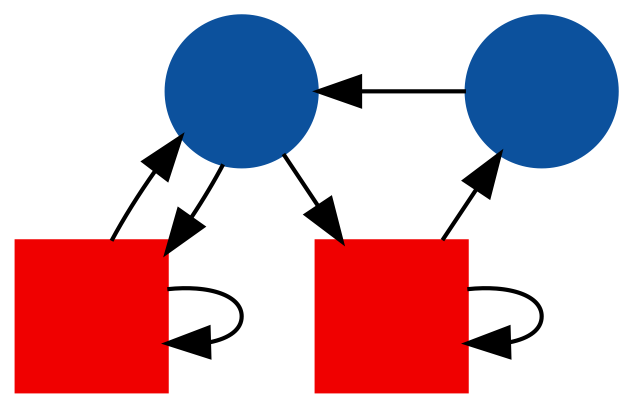}\label{sf:track}}
	\caption{Basic examples of models which are \protect\subref{sf:untrack} untrackable and \protect\subref{sf:track} trackable.}
	\label{fig:examples}
\end{figure}
In the model shown in Figure~\ref{sf:untrack}, the two intersecting cycles of the form (\bluecircle, \redsquare, \bluecircle) cause the model to be untrackable, because this color sequence begins and ends at the central \bluecircle node but permits two paths -- the model lacks the ``unique path property'' defined in \cite{CrespiACM2008}.
Indeed, every time the color sequence (\bluecircle, \redsquare, \bluecircle) is observed the number of possible node sequences doubles, resulting in the exponential growth expected for an untrackable model.

The model in Figure~\ref{sf:track}, on the other hand, does not possess this pathology.
While it may not be possible to know what node the system is at for arbitrarily long periods of time (e.g., when a color sequence of the form (\redsquare, \redsquare, \dots) is observed), there is no color sequence which causes exponential growth in the number of hypotheses.
In fact, the results of Section~\ref{sec:bounded} show that the number of hypotheses is $O(1)$ for this model -- at any given time, there are only one or two possible node sequences which are consistent with the observed color sequence.

\section{Bounded Growth in Strongly-Connected Trackable Models}
\label{sec:bounded}
It has been shown that for a trackable model $n_G(t)=O(t^k)$ with $k>0$ (unbounded polynomial growth) or $n_G(t)=O(1)$ (bounded growth) are the only two possible cases \cite{CrespiACM2008}.
For unbounded growth to occur there must exist three paths $\pi_1,\pi_2,\pi_3:[0, T]\to V$ such that:
\begin{enumerate}
	\item All three paths permit the same color sequence: $\ell(\pi_1(i)) = \ell(\pi_2(i)) = \ell(\pi_3(i))$ $\forall i\in[0, T]$
	\item $\pi_1$ is a cycle of period $T$: $\pi_1(0) = \pi_1(T) = a$
	\item $\pi_2$ is a cycle of period $T$ which is distinct from $\pi_1$: $\pi_2(0) = \pi_2(T) = b\neq a$
	\item $\pi_3$ is a path of length $T+1$ which starts at the beginning of $\pi_1$ and ends at the beginning of $\pi_2$: $\pi_3(0)=a$, $\pi_3(T)=b$
\end{enumerate}

This structure is shown for an abstract model in Figure~\ref{fig:graphBranchBridge}.
\begin{figure}
	\centering
	\includegraphics{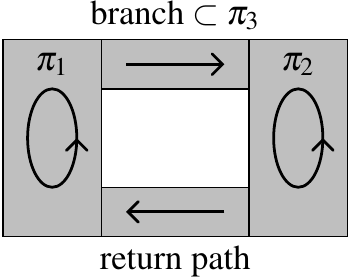}
%
%
%
	\caption{Components of a strongly connected model with paths $\pi_1$, $\pi_2$, and $\pi_3$. The ``branch'' is the portion of $\pi_3$ which is part of neither $\pi_1$ or $\pi_2$. The ``return path'' must exist for the model to be strongly connected. This construct is relevant to Theorem \ref{thm:bounded}.}
	\label{fig:graphBranchBridge}
\end{figure}
The ``branch'' contains whichever nodes and edges in the path $\pi_3$ from $\pi_1$ to $\pi_2$ are not contained in the cycles $\pi_1$ and $\pi_2$.

Suppose now that the model is \emph{strongly connected}, meaning that every node is reachable from every other node.
For this condition to hold, there must be at least one ``return path'' to get from $\pi_2$ back to $\pi_1$. Using this structure, we can prove the following theorem.

\begin{theorem}
\label{thm:bounded}
In a strongly-connected trackable model $G$, $n_G(t)=O(1)$.
\end{theorem}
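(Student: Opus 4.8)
The plan is to argue by contradiction using the characterization of trackability through intersecting cycles recalled in Section~\ref{sec:bg}. Suppose $n_G(t)$ is \emph{not} $O(1)$. Since the excerpt recalls from \cite{CrespiACM2008} that the only alternative for a trackable model is unbounded polynomial growth, this would force the existence of the three-path structure already described: cycles $\pi_1,\pi_2$ of period $T$ based at distinct nodes $a,b$ and a bridge $\pi_3$ from $a$ to $b$, all permitting a common color string $w$ of length $T$. First I would record the one extra ingredient supplied by the hypothesis of the theorem: because $G$ is strongly connected there is a path $\rho$ from $b$ back to $a$; call its color string $u$ and its length $R$.

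The heart of the argument is to manufacture two \emph{distinct} intersecting cycles that permit the same color sequence, which by the characterization quoted in Section~\ref{sec:bg} is exactly what a trackable model forbids. The key observation is that from $a$ one can read the color string $w^2u$ in two genuinely different ways: either cross $\pi_3$ to $b$ immediately (reading the first $w$), loop once around $\pi_2$ (reading the second $w$), and finally traverse $\rho$ back to $a$; or loop once around $\pi_1$ (first $w$), cross $\pi_3$ (second $w$), and then traverse $\rho$ back to $a$. More generally, for any $k\ge 2$ the string $w^ku$ admits one such closed walk for each step $j\in\{0,\dots,k-1\}$ at which the bridge is taken: loop $\pi_1$ exactly $j$ times, cross $\pi_3$, loop $\pi_2$ exactly $k-1-j$ times, then traverse $\rho$. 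Each of these is a cycle based at $a$ of the same period $kT+R$, and all of them emit the same periodic color sequence determined by $w^ku$.

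It then remains to check that at least two of these walks are genuinely distinct and that they intersect; taking $j=0$ and $j=1$ with $k=2$ suffices. Both are based at $a$, so they agree there and are therefore intersecting, yet at time $T$ one sits at $b$ while the other sits at $a$, so they are distinct as periodic node sequences. Since they permit the same color sequence, their existence contradicts trackability, and the assumption $n_G(t)\ne O(1)$ must fail, giving the result. The main obstacle — and really the only non-obvious point — is precisely the need to use $w^k$ with $k\ge 2$ rather than a single copy of $w$: reading $wu$ alone from $a$ pins the bridge to the unique available step and yields only one completing walk (since $a$ need not admit any $u$-labeled transition), so it is the \emph{slack} created by an extra loop that lets the bridge float to two different times and produce the second, intersecting cycle. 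The remaining bookkeeping — matching colors along $\rho$ and verifying each walk is a legal path in $G$ — is routine once this construction is in hand.
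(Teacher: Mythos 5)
Your proof is correct and follows essentially the same strategy as the paper's: assume the three-path structure $\pi_1,\pi_2,\pi_3$ forced by unbounded polynomial growth, use strong connectivity to obtain a return path, and splice together two closed walks of equal length that differ only in when the bridge $\pi_3$ is crossed, yielding intersecting cycles with a common color sequence that contradict trackability. The only difference is cosmetic — you base the cycles at $a$ and append the return path at the end, whereas the paper bases them on $\pi_2$ and traverses the return path first — so the two arguments are the same construction up to a cyclic shift.
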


\begin{proof}
Suppose a strongly-connected trackable model $G$ has $n_G(t)=O(t^k)$ with $k >0$, and hence has the three paths $\pi_1$, $\pi_2$, $\pi_3$ described above.
In such a model, it is always possible to construct a pair of intersecting cycles $\pi_4$, $\pi_5$ having the same color sequence according to the following recipe:
\begin{enumerate}
	\item $\pi_4$ and $\pi_5$ start at a node in $\pi_2$ which connects to a return path.
	\item $\pi_4$ and $\pi_5$ both traverse the return path and begin to follow $\pi_1$.
	\item $\pi_4$ and $\pi_5$ traverse $\pi_1$ until the branch to $\pi_3$ is reached.
	\item $\pi_4$ stays on $\pi_1$ while $\pi_5$ follows $\pi_3$ then begins to follow $\pi_2$.
	\item $\pi_4$ reaches the branch a second time and follows $\pi_3$ while $\pi_5$ continues to follow $\pi_2$.
	\item $\pi_4$ and $\pi_5$ follow $\pi_2$ until they reach the starting node in $\pi_2$, thus completing the cycles.
\end{enumerate}
Note that it does not matter how long the return path is, if there are multiple return paths, or at what point the return path connects between $\pi_1$ and $\pi_2$, because $\pi_4$ and $\pi_5$ traverse a given return path together.
Also, because $\pi_3$ has the same length as $\pi_1$ and $\pi_2$, $\pi_4$ and $\pi_5$ will reconnect at the same node in $\pi_2$.
Because a trackable model cannot have intersecting cycles which permit the same color sequence, this contradicts our assumptions.
Therefore, a strongly-connected trackable model cannot have $\pi_1$, $\pi_2$, and $\pi_3$ and hence must have $n_G(t)=O(1)$.
\end{proof}

To illustrate this proof, consider the model shown in Figure~\ref{fig:branchExample}.
\begin{figure}
	\centering
	\includegraphics{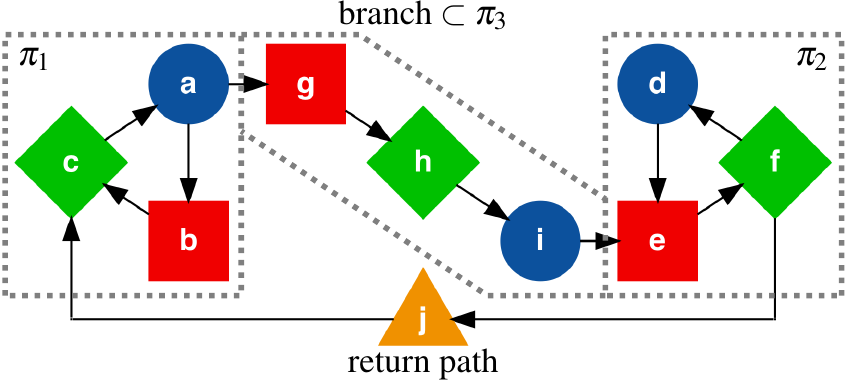}
	\caption{Strongly-connected model with $\pi_1$, $\pi_2$, and $\pi_3$. To make the example more general, the branch and return path were constructed to consist of more than a single edge.}
	\label{fig:branchExample}
\end{figure}
This model has $\pi_1$, $\pi_2$ and $\pi_3$ as listed in Table~\ref{tab:polyCycList}.
\begin{table}
	\centering
	\caption{Cycles $\pi_1$, $\pi_2$ and path $\pi_3$ associated with Figure~\ref{fig:branchExample}.}
	\begin{tabular}{rCCCCCCC}
		\toprule
		$t$ & 0 & 1 & 2 & 3 & 4 & 5 & 6\\
		Color &\bluecircle & \redsquare & \greendiamond & \bluecircle & \redsquare & \greendiamond & \bluecircle\\
		\midrule
		$\pi_1$ & a & b & c & a & b & c & a\\
		$\pi_2$ & d & e & f & d & e & f & d\\
		$\pi_3$ & a & g & h & i & e & f & d\\
		\bottomrule
	\end{tabular}
	\label{tab:polyCycList}
\end{table}
Following the recipe from the proof, we can construct the pair of cycles given in Table~\ref{tab:intCycles}.
\begin{table*}
	\centering
	\caption{Cycles $\pi_4$ and $\pi_5$ associated with Figure~\ref{fig:branchExample}.}
	\begin{tabular}{rCCCCCCCCCCCC}
		\toprule
		$t$ & 0 & 1 & 2 & 3 & 4 & 5 & 6 & 7 & 8 & 9 & 10 & 11\\
		Color & \greendiamond & \orangetriangle & \greendiamond & \bluecircle & \redsquare & \greendiamond & \bluecircle & \redsquare & \greendiamond & \bluecircle & \redsquare & \greendiamond\\
		\midrule
		$\pi_4$ & f & j & c & a & b & c & a & g & h & i & e & f\\
		$\pi_5$ & f & j & c & a & g & h & i & e & f & d & e & f\\
		\bottomrule
	\end{tabular}
	\label{tab:intCycles}
\end{table*}
The color sequence given in the table permits two paths which begin and end at the same node, which means the size of the hypothesis set will double each time the color sequence is observed -- this exponential growth in the number of hypotheses means that the model is not trackable.

This theorem has good implications for the feasibility of the node sequence reconstruction problem in strongly-connected trackable models.
Specifically, the basic implication of trackability is that there is a possibility for \emph{unbounded} polynomial growth -- a tracking system with finite memory monitoring a trackable model for a long time would require a mechanism for pruning unlikely but permissible hypotheses.
But if a trackable model is also strongly connected we only ever have to keep track of a finite number of hypotheses, regardless of the observation length.

In a trackable model which is not strongly connected, we can discuss the implications of this theorem by borrowing some terminology from the study of Markov chains \cite{Bertsekas2008}.
The set of nodes which can be reached from node $i$ are called the nodes \emph{accessible} from $i$.
A node $i$ is called \emph{recurrent} if, for every node $j$ which is accessible from $i$, $i$ is accessible from $j$.
A node which is not recurrent is called \emph{transient}.
The set of nodes reachable from a recurrent node form a \emph{recurrent class}.
Note that a model could have multiple disjoint recurrent classes.

Using this terminology, we can state the following corollary to Theorem~\ref{thm:bounded}.

\begin{corollary}
	\label{cor:transient}
	In a trackable model $G$ for which $n_G(t)=O(t^k)$ with $k>0$, the cycle $\pi_1$ involves only transient nodes.
\end{corollary}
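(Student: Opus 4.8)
The plan is to establish transience of every node of $\pi_1$ directly from the definition: a node $v$ is transient as soon as some node accessible from $v$ cannot reach $v$ in return. Since the hypothesis count grows as $O(t^k)$ with $k > 0$, the model contains the configuration $\pi_1$, $\pi_2$, $\pi_3$ introduced above, with $\pi_1$ a cycle through $a$, $\pi_2$ a cycle through $b$, and $\pi_3$ a path from $a$ to $b$. I would fix an arbitrary node $v$ on $\pi_1$ and exhibit a witness node accessible from $v$ but from which $v$ is inaccessible; the natural candidate is $b$.

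First I would verify that $b$ is accessible from $v$: traversing the cycle $\pi_1$ from $v$ reaches $a$, and $\pi_3$ carries $a$ to $b$, so there is a path from $v$ to $b$. The crux is the reverse direction---showing that $v$ is \emph{not} accessible from $b$. I would argue by contradiction: a path from $b$ (a node of $\pi_2$) to $v$ (a node of $\pi_1$) is exactly a ``return path'' from $\pi_2$ to $\pi_1$. Supplying it to the recipe in the proof of Theorem~\ref{thm:bounded} yields intersecting cycles $\pi_4$, $\pi_5$ which permit the same color sequence, contradicting trackability. Hence $v$ is transient, and since $v$ was arbitrary, every node of $\pi_1$ is transient.

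The main obstacle is confirming that the construction of Theorem~\ref{thm:bounded} still applies when the hypothesized return path lands at an arbitrary node $v$ of $\pi_1$ rather than at the distinguished node $a$. For this I would invoke the observation already made in that proof that the construction is insensitive to where the return path attaches, because $\pi_4$ and $\pi_5$ traverse it together and then follow $\pi_1$ to the branch point before diverging. The only detail left to check is that $a$ is reachable from $v$ by going around $\pi_1$, which holds because $\pi_1$ is a single closed cycle. I would also remark that the same argument shows no node of $\pi_2$ can reach $v$, so $\pi_1$ lies strictly ``upstream'' of $\pi_2$ in the accessibility order---which is exactly why the conclusion singles out $\pi_1$ and not $\pi_2$.
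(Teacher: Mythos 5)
Your proposal is correct and follows essentially the same route as the paper's proof: show $\pi_2$ is accessible from $\pi_1$ (via $\pi_1$ to $a$ and then $\pi_3$), and rule out any return path from $\pi_2$ back to $\pi_1$ by feeding it into the recipe from the proof of Theorem~\ref{thm:bounded} to contradict trackability. Your version merely makes explicit the per-node witness argument and the insensitivity of that construction to where the return path attaches, both of which the paper treats more briefly.
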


\begin{proof}
	For $n_G(t)=O(t^k),~k>0$ to hold, the paths $\pi_1$, $\pi_2$, and $\pi_3$ described above must exist.
	But, for the model to be trackable, there can be no return path from the nodes in $\pi_2$ back to the nodes in $\pi_1$, otherwise the recipe in the proof to Theorem~\ref{thm:bounded} would lead to a contradiction.
	Therefore, the nodes in $\pi_1$ cannot be accessible from the nodes in $\pi_2$.
	But, the nodes in $\pi_2$ are accessible from the nodes in $\pi_1$, so the nodes in $\pi_1$ are transient.
\end{proof}

The behavior of a weak model or a Markov chain can be divided into two parts: the burn-in period when it visits the transient nodes, and the steady-state period when it has entered a recurrent class.
Corollary~\ref{cor:transient} implies that unbounded polynomial growth arises only from the transient burn-in period.
It may or may not be possible, however, to tell when the system has transitioned between burn-in and steady-state behavior.
Consider the model in Figure~\ref{sf:ex1}: it is never possible to tell when the system has transitioned from $a$ to $b$, and hence $n_G(t)$ increases linearly in $t$.
\begin{figure}
	\centering
	\subfloat[]{\includegraphics[scale=0.15]{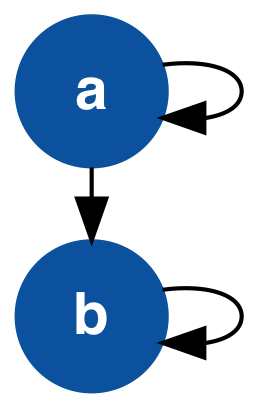}\label{sf:ex1}}
	\quad
	\subfloat[]{\includegraphics[scale=0.15]{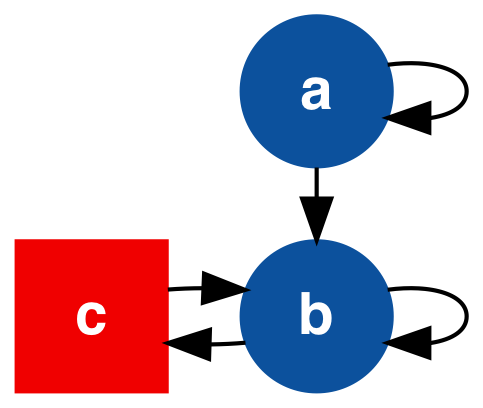}\label{sf:ex2}}
	\caption{Model for which $n_G(t)=O(t^k)$, $k>0$ and it \protect\subref{sf:ex1} is not and \protect\subref{sf:ex2} is eventually possible to tell that the system has entered a recurrent class.}
	\label{fig:burnExample}
\end{figure}
In the model in Figure~\ref{sf:ex2}, however, once \redsquare is observed we know the system is at $c$ and hence has entered the recurrent class $\{b,c\}$ and will stay there -- for this example, $n_G(t)$ remains fixed at whatever size it was before \redsquare was observed.

In cases where it is not possible to know when the system has entered a recurrent class, it is necessary to characterize the typical length of the burn-in period so that unlikely trajectories which spend excessive time in transient nodes may be removed from the hypothesis set.
In a Markov chain the expected number of time steps before a recurrent class is entered when starting from state $i$ is called the mean absorption time $\mu_i$, and is found by solving the following system of equations \cite{Bertsekas2008}:
\begin{gather}
	\mu_i = \begin{cases}
		0, &\text{$i$ recurrent}\\
		1 + \sum_{j\in V} P_{ij}\mu_j, &\text{$i$ transient}
	\end{cases},
\end{gather}
where $P_{ij}=P(X_{t+1}=j|X_t=i)$ is the probability of transitioning from node $i$ to node $j$.
For the model in Figure~\ref{sf:ex1}, we have
\begin{gather}
	\mu_a = \frac{1}{1-P_{aa}}.
\end{gather}
For example, if $P_{aa}=0.9$ then $\mu_a=10$: we expect to have reached $b$ after 10 steps or so, after which we can remove the highly unlikely trajectories of the form $(a,a,\dots)$ from the hypothesis set.

\section{Reconstructability}
\label{sec:recon}
The hypothesis set grows whenever a node which has \emph{same-colored out-neighbors} is encountered.
Formally, given $(v,v_1),(v,v_2)\in E$, we say that $v_1$ and $v_2$ are same-colored out-neighbors of $v$ if $\ell(v_1)=\ell(v_2)$.\footnote{The appropriate generalization for multi-colored graphs is that same-colored out-neighbors are pairs of out-neighbors which have the \emph{potential} to grow the hypothesis set because they share at least one color: $L(v_1)\cap L(v_2)\neq\emptyset$.}
If there are $m$ out-neighbors $v_i$ of $v$ such that $\ell(v_i)$ is the same for all $i$, we refer to $m$ as the \emph{multiplicity} of the same-colored out-neighbors.
If a system starts out at a known node, the nodes visited can be reconstructed unambiguously until a same-colored out-neighbor is encountered.
Once a same-colored out-neighbor is encountered the hypothesis set branches, and a key question is whether or not we can eventually determine which branch was taken.
The following theorem shows that this reconstruction is always possible.

\begin{theorem}
	\label{thm:recon}
	In a strongly-connected trackable model, it will eventually be possible to identify which branch was taken at a node that has same-colored out-neighbors.
\end{theorem}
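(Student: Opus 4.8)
The plan is to argue by contradiction from the assumption that the branch is \emph{never} resolved, and to isolate the structural obstruction that strong connectivity and trackability jointly forbid. Fix a node $v$ with same-colored out-neighbors, and suppose the agent leaves $v$ along the edge to $v_1$, creating a true hypothesis and a competing hypothesis that instead takes some $v_2$ with $\ell(v_2)=\ell(v_1)$. Here ``identifying the branch'' means that every surviving hypothesis eventually agrees on the step out of $v$; since the true hypothesis never dies, this is the same as saying that every $v_2$-hypothesis eventually becomes inconsistent with the observed colors. If instead some $v_2$-hypothesis survives for all time, then because the $v_2$-hypotheses form a finitely-branching tree, K\"onig's lemma yields a single infinite walk $\beta$ starting from $v_2$ whose color sequence matches that of the realized walk $\alpha$ starting from $v_1$ forever. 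The whole problem therefore reduces to showing that such a perpetually-matching pair $(\alpha,\beta)$ cannot occur, except on a negligible set of trajectories.

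First I would establish the deterministic backbone: \emph{the two hypotheses can never occupy the same node at the same time}. If $\alpha(i)=\beta(i)$ for some $i\ge 1$, then the two distinct paths from $v$ to $\alpha(i)$ are same-length paths permitting the same color sequence; appending a return path from $\alpha(i)$ back to $v$, which exists by strong connectivity, produces two distinct \emph{intersecting} cycles permitting the same color sequence, contradicting trackability exactly as in the recipe behind Theorem~\ref{thm:bounded}. Hence $\alpha(i)\neq\beta(i)$ for all $i\ge 1$, so the pair $(\alpha(i),\beta(i))$ lives on the off-diagonal of $V\times V$; finiteness then forces it into a cycle, so there are $s<t$ with $(\alpha(s),\beta(s))=(\alpha(t),\beta(t))$, yielding two \emph{disjoint} cycles permitting the identical color sequence. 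The operational consequence I want is this: as long as ambiguity persists the two hypotheses sit at distinct nodes, so the \emph{first} time the realized walk $\alpha$ takes a transition whose emitted color cannot be reproduced from $\beta$'s current node, the $v_2$-hypothesis dies and the branch is resolved.

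The probabilistic finish, and the main obstacle, is to show that such a distinguishing transition is almost surely eventually observed, which is where I would invoke the Markov-chain machinery previewed in the introduction. Endowing the edges with strictly positive transition probabilities makes the chain irreducible, so every node is positive-recurrent and no proper cyclic subset of $V$ can trap the realized walk forever. The remaining danger is precisely the synchronized disjoint cycles found above: were the pair to remain on them for all time, $\beta$ would track $\alpha$ indefinitely. I would rule this out by studying the induced Markov chain on the off-diagonal pairs and showing that the ``both-alive'' region is not trapping---every such pair escapes in finite time almost surely, and trackability (the absence of two distinct same-colored cycles through a common node, i.e.\ the very obstruction excluded by Theorem~\ref{thm:bounded}) guarantees that an escape cannot merely regenerate a fresh perpetual ambiguity. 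The hard part is exactly this transience/non-regeneration claim; once it is in place, the $v_2$-hypothesis dies with probability one, in a finite expected number of steps controlled by the chain's mean recurrence and first-passage times. This also explains why the conclusion is naturally phrased as ``eventually'': the lone exceptional trajectories are those trapped forever inside a same-colored cycle, which form a probability-zero set whose expected avoidance time is what the recurrence-time computation in the sequel quantifies.
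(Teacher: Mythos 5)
Your deterministic backbone is sound and, in fact, reproduces a result the paper proves separately: the observation that a surviving phantom walk $\beta$ can never occupy the same node as the realized walk $\alpha$ at the same time is exactly Corollary~\ref{cor:singlePath} (proved there by the same return-path construction you use), and the pigeonhole step producing two non-intersecting, synchronized cycles permitting the same color sequence is correct. But this is where the argument stops being a proof. Two non-intersecting same-colored cycles are \emph{not} forbidden by trackability: the paper's symmetric example (Figure~\ref{fig:symmetric}) is a strongly-connected trackable model containing exactly such a pair, so no contradiction follows from their existence. You acknowledge this by deferring to an unproven ``transience/non-regeneration'' claim about an induced pair chain, but that claim \emph{is} the theorem; labeling it ``the hard part'' does not discharge it. The pair-chain framing is also shaky on its own terms: $\beta$ is extracted by K\"onig's lemma from the entire infinite future of $\alpha$, so $(\alpha(i),\beta(i))$ is not generated by any Markov transition rule, and positive recurrence of $\alpha$ alone does not kill the phantom, which is free to shadow $\alpha$ off the cycle along any color-matching edge.

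The ingredient you are missing is the one the paper's proof is built on, and it is the only place strong connectivity is really used: the realized walk can return to $v$ \emph{itself} and branch again. If the branch taken at $v$ is never identified, then each revisit to $v$ spawns $m-1$ fresh hypotheses while the previously spawned ones are still alive (and, by your own non-intersection lemma, those older phantoms are not at $v$ at that moment, so they persist as distinct node sequences rather than merging). Along a color sequence that returns to $v$ repeatedly, $n_G(t)$ therefore grows without bound, which is impossible in a strongly-connected trackable model by the $O(1)$ bound of Theorem~\ref{thm:bounded}. This counting argument is purely combinatorial and replaces your entire probabilistic apparatus; the transition probabilities, recurrence times, and almost-sure language belong to the \emph{quantification} of ``eventually'' in the discussion that follows the theorem, not to its proof.
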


\begin{proof}
	Suppose the theorem does not hold. Because the model is assumed to be strongly connected and trackable, a node $v$ with same-colored out-neighbors of multiplicity $m$ can be revisited an infinite number of times during an observation of infinite length. Every time the same-colored out-neighbors of $v$ are encountered, the number of hypotheses which go through $v$ at that time is multiplied by $m$. If it were not possible to identify which branch was taken before the system encounters the same-colored out-neighbors again, the number of such hypotheses would not have shrunk back down before this multiplication occurs. This would lead to exponential growth in the size of the hypothesis set, contradicting the assumption that the model is trackable.
\end{proof}

This has good implications for reconstruction of the sequence of nodes visited in a strongly-connected trackable model.
Specifically, if it is possible to uniquely identify the node the system visits at some time (such as when the system starts at a known node), then it will eventually be possible to uniquely identify all subsequent nodes visited by the system.

To quantify what ``eventually'' means in a Markov chain, consider a node $v$ which has same-colored out-neighbors.
We know that the question of which branch was taken must be resolved by the time the system transitions out of $v$ a second time.
(It may of course be much quicker, depending on the specific structure of the model -- see below for further discussion of this point.)
The expected time to travel from a node $v$ back to itself is called the mean recurrence time $t_v^*$.
This can be computed by first computing the mean first passage times $t_i$ (i.e., the expected time to first visit $v$ when starting from $i$) by solving the following system of equations \cite{Bertsekas2008}:
\begin{gather}
	t_i = \begin{cases}
		0, & i=v\\
		1 + \sum_{j\in V} P_{ij}t_j, & i\neq v
	\end{cases},
\end{gather}
then substituting the result into
\begin{gather}
	t_v^* = 1 + \sum_{j\in V}P_{vj}t_j.
\end{gather}
(Note that the calculation of mean first passage times is very similar to the calculation of mean absorption times in the previous section. Mean absorption times are the expected time until the first passage of \emph{any} recurrent node, whereas mean first passage times are the expected time before the first passage of the \emph{specific} node $v$.)

As an example, consider the model shown in Figure~\ref{sf:reconstructEx}.
\begin{figure}
	\centering
	\subfloat[Model]{\includegraphics[scale=0.15]{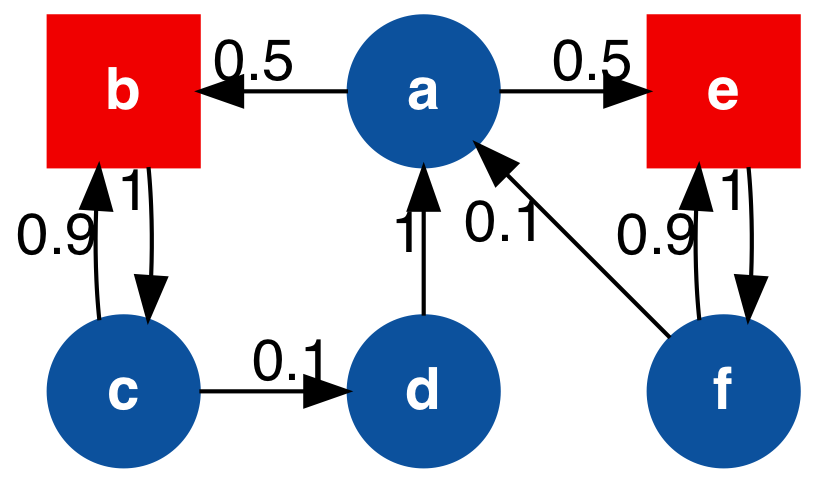}\label{sf:reconstructEx}}
	\quad
	\subfloat[Simulation results]{\includegraphics{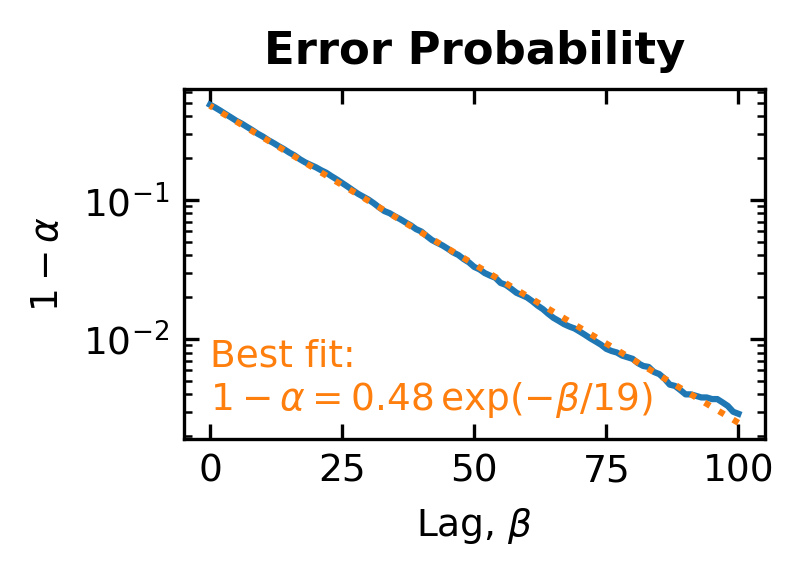}\label{sf:reconstructExRes}}
	\caption{\protect\subref{sf:reconstructEx} Model to illustrate the mean time to reconstruct which branch out of $a$ is taken. The transition probabilities used for the example are indicated on each edge. The nodes are single-colored, so the emission matrix needed to express this as a discrete-symbol hidden Markov model only has one non-zero element in each row. \protect\subref{sf:reconstructExRes} Error probability $1-\alpha$ as a function of lag $\beta$. The results (blue solid) are shown on a log-linear scale together with the best fit line (orange dotted). The behavior is well-described by an exponential with a time constant of 19.}
\end{figure}
Once the system leaves $a$, it could spend an arbitrary amount of time in the cycles of the form (\redsquare, \bluecircle, \redsquare).
But, when the system returns to $a$, either the sequence (\redsquare, \bluecircle, \bluecircle, \bluecircle, \redsquare) or the sequence (\redsquare, \bluecircle, \bluecircle, \redsquare) is seen, which indicates whether the system took the left- or right-hand branch, respectively.
For the model in Figure~\ref{sf:reconstructEx},
\begin{gather}
	t_a^* = 1 + P_{ab}\cdot\left(\frac{2}{P_{cd}} + 1\right) + P_{ae}\cdot\frac{2}{P_{fa}} = 21.5.
\end{gather}

In order to illustrate this, we simulated \num{10000} traversals, each consisting of 200 time steps.
(We used many traversals in order to obtain a good estimate the error rate as a function of lag.)
In order to avoid issues with reconstruction of the first couple nodes, we started each traversal at $a$.
We reconstructed the most likely node sequence using the Viterbi algorithm and computed the accuracy $\alpha=P(\hat{X}_{t-\beta}=X_{t-\beta})$ as a function of the lag $\beta$, where $\hat{X}_{t}$ and $X_t$ are the estimated and true nodes at time $t$, respectively.
The results are shown in Figure~\ref{sf:reconstructExRes}.
The behavior of the error probability $1-\alpha$ is linear when shown on a log-linear scale, indicating that it is described by a function of the form
\begin{gather}
	1 - \alpha = A\exp(-\beta / \tau),
\end{gather}
where $\tau$ is the time constant and $A$ is the error probability at $\beta=0$.
The best fit values are $A=0.48$, $\tau=19\approx t_a^*$: the time constant of the recovery in accuracy following a same-colored out-neighbor is roughly equal to the mean recurrence time, as expected.
Furthermore, the accuracy for $\beta=0$ is $1-A= 52\%$, consistent with the 50/50 chance of being in the left-hand or right-hand branch.

Note, however, that the mean recurrence time can be a very pessimistic estimate of the time needed to determine which branch was taken for two reasons.
First, the structure of the model may permit unique reconstruction of which branch was taken well before the same-colored out-neighbors are revisited.
If, for example, node $c$ in the present example were changed to emit \greendiamond, the ambiguity caused by the color sequence (\bluecircle, \redsquare) would be resolved on the very next step when either \greendiamond or \bluecircle is observed.
Even if the structure of the model does not permit early reconstruction in this manner, however, the transition dynamics themselves may facilitate better performance than would be expected from the mean recurrence time alone.
For example, suppose that $P_{fa}$ is reduced to 0.01 in the present example.
This yields $t_a^* = 111.5$, but repeating the above analysis gives $\tau=21$, $A=0.08$.
This is because the average time spent in the two branches is now dramatically different, so sufficiently long sequences of the form $(\redsquare,\bluecircle,\redsquare,\bluecircle,\dots)$ are correctly inferred to have come from the right-hand branch even before node $a$ is revisited.

\section{Maximum Size of the Hypothesis Set}
\label{sec:maxSize}
Having established that strongly-connected trackable models have a bounded number of hypotheses, it is of interest to know what that bound is.
The main theorem of this section stems from the observation that the size of the hypothesis set only increases when:
\begin{enumerate}
	\item the first color observed can be emitted by more than one node, or
	\item a same-colored out-neighbor is encountered (as discussed in the previous section).
\end{enumerate}
Before stating and proving the theorem, we need the following corollary to Theorem~\ref{thm:recon}.

\begin{corollary}
	In a strongly-connected trackable model for which observations start from a known node, there can be at most one hypothesis which visits a given node at a given time.
	\label{cor:singlePath}
\end{corollary}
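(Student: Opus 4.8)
The plan is to argue by contradiction, so I would begin by supposing that for some node $v$ and some time $\tau$ there are two \emph{distinct} hypotheses that both place the system at $v$ at time $\tau$. Since observations start from a known node, both hypotheses agree at time $0$, and being distinct they must first diverge at some earlier time $\tau_0<\tau$ at a common node $w$; the two successors of $w$ chosen by the two hypotheses are distinct but carry the same observed color $Y_{\tau_0+1}$, so they are same-colored out-neighbors of $w$. In other words, the two hypotheses branch at $w$ and then \emph{reconverge} at $v$ by time $\tau$.

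The heart of the argument is that such a reconvergence makes the branch at $w$ permanently unrecoverable, which is exactly what Theorem~\ref{thm:recon} forbids. Once both hypotheses occupy the same node $v$ at the same time $\tau$, the admissible continuations consistent with any future color sequence depend only on $v$ and not on the history, so every future observation is equally compatible with either branch having been taken at $w$. Hence for the color sequence in question there is no time at which we could identify which branch was taken at $w$, contradicting the guarantee of Theorem~\ref{thm:recon} that in a strongly-connected trackable model this identification is eventually possible.

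To make this fully rigorous without relying on the informal notion of ``identifiability,'' I would instead exhibit the forbidden structure directly, mirroring the construction in the proof of Theorem~\ref{thm:bounded}. Let $P_1,P_2:[0,\tau]\to V$ be the two distinct node sequences from the known start to $v$; by consistency $\ell(P_1(i))=\ell(P_2(i))=Y_i$ for all $i$. Strong connectivity supplies a return path $R$ from $v$ back to the start node, and gluing $R$ onto each of $P_1$ and $P_2$ produces two cycles $\pi_4$ and $\pi_5$ of equal period that share the return segment. These cycles permit the same color sequence (the colors of $P_1$ and $P_2$ agree and $R$ is common), they intersect (both pass through the start node, and through $v$), and they are distinct because $P_1\neq P_2$. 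A strongly-connected trackable model cannot contain intersecting cycles permitting the same color sequence, so this is the desired contradiction and at most one hypothesis can reach $v$ at time $\tau$.

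The main obstacle is bookkeeping about distinctness rather than any hard estimate. I must be careful that the statement concerns distinct node sequences up to time $\tau$ (two hypotheses that happen to coincide through $\tau$ are the same prefix and are not double-counted), and that the glued cycles genuinely share a period and color sequence so that they qualify as intersecting cycles permitting the same color sequence; the equal lengths of $P_1$ and $P_2$ together with the shared return path $R$ are what guarantee this, just as the equal lengths of $\pi_1$, $\pi_2$, and $\pi_3$ were essential in Theorem~\ref{thm:bounded}. The known-start hypothesis is also load-bearing: without a common origin, two hypotheses could sit at $v$ at time $\tau$ without ever having branched from a shared node, and neither the reconvergence argument nor the cycle construction would apply.
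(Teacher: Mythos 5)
Your proposal is correct and matches the paper's treatment: your first argument (reconvergence at $v$ makes the branch taken at $w$ permanently unidentifiable, contradicting Theorem~\ref{thm:recon}) is exactly the paper's proof, and your second, more rigorous construction (gluing a return path onto the two divergent prefixes to form intersecting, equal-period cycles permitting the same color sequence) is the alternate proof the paper sketches immediately after the corollary. Nothing further is needed.
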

\begin{proof}
	Suppose the corollary does not hold. Consider a color sequence such that, at some point after the known starting point, multiple hypotheses are created because of same-colored out-neighbors. Furthermore, suppose that the color sequence permits a pair of these hypotheses to visit the same node at the same time. Because the subsequent transitions do not depend on any of the previous transitions (Markov property), this means that which branch was taken at the node with same-colored out-neighbors can never be reconstructed, contradicting Theorem~\ref{thm:recon}.
\end{proof}

As an alternate proof of the corollary, note that color sequence described above would permit the construction of two paths from the (known) starting node back to itself.
Therefore, a model which permits such a color sequence lacks the unique path property, violating the assumption that the model is trackable.

Note that a given node may have multiple sets of same-colored out-neighbors.
Let $M_v$ be the maximum multiplicity of the sets of same-colored out-neighbors at node $v$, where $M_v=1$ for a node which lacks same-colored out-neighbors.
As an example, for the model in Figure~\ref{sf:reconstructEx}, $M_a=2$ and $M_c=1$.

In addition, let $K$ to be the largest number of same colored nodes in the model.
For example, for the model in Figure~\ref{sf:reconstructEx}, $K=4$.

\begin{figure}
	\centering
	\includegraphics[width=\columnwidth]{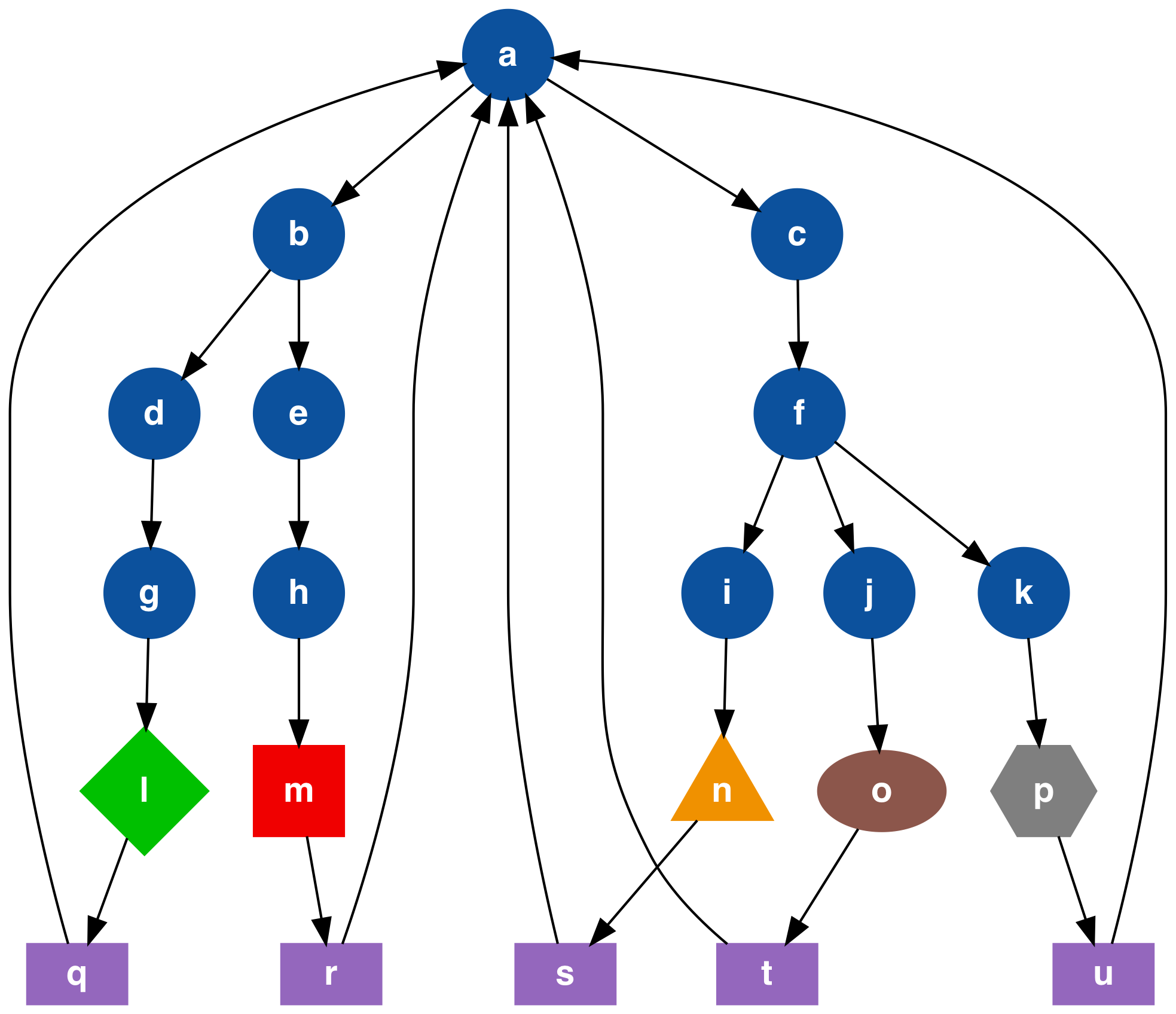}
	\caption{This model illustrates the bounds developed in Theorem~\ref{thm:bound}.}
	\label{fig:boundDemo}
\end{figure}

\begin{theorem}
	In a strongly-connected trackable model, $G$,
	\begin{gather}
		n_G(t)\leq K \left( 1 + \sum_{v\in V} (M_v - 1)\right).
	\end{gather}
	When the observations start from a known node,
	\begin{gather}
		n_G(t)\leq 1 + \sum_{v\in V} (M_v - 1).
	\end{gather}
	\label{thm:bound}
\end{theorem}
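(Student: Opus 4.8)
The plan is to prove the second (known-start) bound first and then obtain the general bound from it by a short conditioning argument. For the general bound, I would partition the hypothesis set $\mathcal{H}_G(Y_{[t]})$ according to the node each hypothesis occupies at time $0$. Since the first observed color $Y_1$ can be emitted by at most $K$ nodes (by the definition of $K$ as the largest number of same-colored nodes), there are at most $K$ such classes. Each class consists of all hypotheses sharing a fixed, known starting node, so the known-start bound applies within it; summing over the at most $K$ classes yields $n_G(t)\le K\bigl(1+\sum_{v\in V}(M_v-1)\bigr)$.

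For the known-start bound, I would model the live hypotheses at time $t$ as paths emanating from the common start node. By Corollary~\ref{cor:singlePath} no two of them occupy the same node at the same time, so they never coalesce going forward and, read backward in time, they merge only at nodes with same-colored out-neighbors. This organizes the hypotheses into a rooted tree whose root is the start node, whose leaves are the $N=|\mathcal{H}_G(Y_{[t]})|$ current hypotheses, and whose internal branchings occur exactly where a hypothesis encountered same-colored out-neighbors. For any rooted tree the number of leaves equals $1+\sum_{\beta}(c_\beta-1)$, summed over internal branchings $\beta$, where $c_\beta$ is the number of surviving children of $\beta$; since a branching at node $v$ splits along a single color class of out-neighbors, $c_\beta\le M_v$. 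The bound $N\le 1+\sum_{v\in V}(M_v-1)$ then follows provided each node $v$ hosts at most one branching in this tree.

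That last statement is the crux, and I would prove it by contradiction in the spirit of Theorem~\ref{thm:bounded}. Suppose $v$ hosts branchings at two distinct times $s_1<s_2$ (distinct because, by Corollary~\ref{cor:singlePath}, at most one hypothesis sits at $v$ at any instant). Choosing surviving leaves under different children of each branching produces two distinct, non-colliding lineages that both pass through $v$, but at the different times $s_1$ and $s_2$; restricted to $[s_1,s_2]$ these give two same-length node sequences carrying the identical observed color block, one beginning at $v$ and one ending at $v$. Appending a return path supplied by strong connectivity---traversed in common, exactly as in the recipe of Theorem~\ref{thm:bounded}---I would close these into two distinct cycles that intersect at $v$ and permit the same color sequence, contradicting trackability. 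Hence each node branches at most once, and the counting identity gives the bound.

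I expect the crux lemma to be the main obstacle: turning the two same-length, same-colored sub-paths through $v$ into a genuine pair of intersecting cycles requires re-deriving the cycle-stitching construction of Theorem~\ref{thm:bounded} in this more general configuration, and checking that it goes through whether the two branchings are nested (one lineage's excursion returns to $v$) or lie in separate subtrees (the lineages diverged at a common earlier branching). In each case one must verify that the appended common return path realigns the two cycles to the same phase, so that the intersection at $v$ is genuine and the cycles remain distinct; the equal length $s_2-s_1$ of the two sub-paths is what makes this phase alignment possible, mirroring the remark in the proof of Theorem~\ref{thm:bounded} that $\pi_3$ has the same length as $\pi_1$ and $\pi_2$.
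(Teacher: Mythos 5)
Your overall architecture matches the paper's: establish the known-start bound by charging each surviving branching $M_v-1$ against its node $v$ (using Corollary~\ref{cor:singlePath} to guarantee that only one hypothesis enters $v$ at a time), then obtain the general bound by partitioning on the at most $K$ nodes that can emit the first color. The factor-of-$K$ step and the leaf-counting identity are fine, and you have correctly isolated the step on which everything hinges: that each node hosts at most one \emph{live} branching at any time, so that $\sum_{v}(M_v-1)$ counts each node once. The paper discharges this step by a different route than you propose --- it appeals to the reconstruction guarantee of Theorem~\ref{thm:recon} (in the quantified form discussed in Section~\ref{sec:recon}, namely that the branch taken at $v$ must be identified by the time the system transits $v$ a second time), so that when $v$ branches again the earlier branching has already collapsed and no longer contributes to the tally.

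The gap is in your direct proof of that crux lemma. Take the nested case: a lineage branches at $v$ at time $s_1$, one child returns to $v$ at time $s_2$ and branches again, while another child of the $s_1$ branching survives at some node $w\neq v$ at time $s_2$ (it cannot be at $v$ by Corollary~\ref{cor:singlePath}). Restricting to $[s_1,s_2]$ yields a cycle $A$ at $v$ and an equal-length path $B$ from $v$ to $w$ carrying the identical color block --- that is, you recover $\pi_1$ and $\pi_3$ of the structure in Section~\ref{sec:bounded}, but not the second cycle $\pi_2$ at $w$, which the recipe of Theorem~\ref{thm:bounded} requires. Appending a common return path does not repair this: $A$ is already a cycle of length $s_2-s_1$, whereas closing $B$ into a cycle needs a return path from $w$ to $v$ of some positive length, so the two resulting cycles have different lengths and different color sequences and cannot be the intersecting, same-colored pair you need. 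The separate-subtrees case is worse still, since there the two restricted sub-paths need not even share an endpoint with $v$ on both ends. To close the argument you would either have to manufacture a cycle at $w$ permitting the same color block (which need not exist), or argue as the paper effectively does that a branching left unresolved past a second transit of $v$ forces unbounded hypothesis growth, contradicting Theorem~\ref{thm:bounded}. As written, the crux lemma --- and with it the known-start bound --- is not established.
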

\begin{proof}
	We start with the case in which the observations begin at a known node.
	Consider a color sequence with the following properties:
	\begin{enumerate}
		\item Every node which has same-colored out-neighbors is visited by a path consistent with the color sequence.
		\item The paths consistent with the color sequence include transitions into the highest-multiplicity set of same-colored out-neighbors at each node.
		\item The paths consistent with the color sequence visit every node with same-colored out-neighbors on the path before the reconstruction promised by Theorem~\ref{thm:recon} occurs for any of the same-colored out-neighbors encountered.
	\end{enumerate}
	Clearly this color sequence will generate the largest hypothesis set possible.
	If such a color sequence does not exist, then the maximum $n_G(t)$ will be smaller than the bound obtained by assuming such a color sequence exists, thus accounting for the less-than part of the theorem.
	
	To obtain the equal-to part, assume that such a color sequence exists and keep a running tally of $n_G(t)$ as it is processed.
	When starting from a known node, $n_G(1)=1$.
	At each same-colored out-neighbor, the number of hypotheses \emph{through the node with same-colored out-neighbors} is multiplied by $M_v$.
	Corollary~\ref{cor:singlePath} means that there is only ever a single hypothesis going into the node with same-colored out-neighbors at a given time, so we add $M_v - 1$ to our running tally to replace the single hypothesis with the $M_v$ new hypotheses.
	Because $M_v=1$ for nodes with no same-colored out-neighbors, we can simply sum over all nodes to obtain the expression given.
	
	Returning to the case in which the observations do not begin at a known node, note that there are at most $K$ nodes with the color first observed and, for each of those possible starting nodes, there are at most $1 + \sum_{v\in V} (M_v - 1)$ hypotheses as we have just established.
	As a result, there are at most $ K \left( 1 + \sum_{v\in V} (M_v - 1)\right)$ hypotheses even with the ambiguity introduced by not knowing the node when the tracking began.
\end{proof}

The difference between knowing the starting node and not knowing the starting node
is illustrated concretely using the model in Figure~\ref{fig:boundDemo}.
Note that nodes can have multiple same-colored \emph{in}-neighbors.
If observations start when the system is at an unknown same-colored in-neighbor, it will never be possible to identify the first node that was visited.
This would cause $n_G(t)$ to be multiplied by the number of hypotheses generated by this initial ambiguity.
For the model in Figure~\ref{fig:boundDemo}, Theorem~\ref{thm:bound} indicates that $n_G(t) \leq 5$ as long as observations start at a known node.
This bound is realized by the color sequence $(\bluecircle, \bluecircle, \bluecircle, \bluecircle)$.
But, suppose instead that observations start at an unknown one of the same-colored in-neighbors of $a$ (i.e., one of $\{q, r, s, t, u\}$).
This results in the color sequence $(\purplerect, \bluecircle, \bluecircle, \bluecircle, \bluecircle)$, which in turn yields $n_G(4)=25$ because it is not possible to determine which of the \purplerect nodes was visited initially.

Note that the unknown starting point bound is only tight if the color sequence used in the proof of Theorem~\ref{thm:bound} exists and the initial ambiguity from starting at one of the $K$ same-colored nodes is never resolved.
For example, the bound of $n_G(t)\leq 5K=55$ is not tight for the model in Figure~\ref{fig:boundDemo} because the elevenfold ambiguity in an initial observation of \bluecircle is resolved by the time any of the non-\bluecircle nodes are visited.
In general, there are several circumstances under which the initial ambiguity may not be resolved and the bound can be tight:
\begin{enumerate}
	\item The $K$ same-colored nodes could be same-colored in-neighbors of some node.
	\item There could be a color sequence which permits paths from each of the $K$ same-colored nodes to some node. This is a multi-step generalization of the same-colored in-neighbors in the previous case.
	\item The model could be symmetric (i.e., $|\Aut(G)|>1$, where $\Aut(G)$ is the automorphism group of model\footnote{Note that some definitions of automorphisms of graphs ignore the coloring. Here we define an automorphism of a weak model to be a permutation $\sigma$ of the nodes such that $(u,v)\in E \iff (\sigma(u),\sigma(v))\in E$ and $L(u)=L(\sigma(u))$.} $G$): if the $K$ same-colored nodes are not fixed points of the automorphisms then it will not be possible to resolve the initial ambiguity.
\end{enumerate}
As an example of the last case, the model in Figure~\ref{fig:symmetric} has $|\Aut(G)|=K=2$ and no same-colored out-neighbors, so the bound of $n_G(t)\leq 2$ is tight.

\section{Entropy Rates of Weak Models}
\label{sec:entropy}

This section considers the conditional entropy rate of the node sequence given the color sequence.
As entropy rates are determined by the transition and emission probabilities, we consider any given weak model $G=(V, E, L, \Phi)$ in this section to have been converted to a lumped Markov chain through assignment of transition probabilities $P_{ij}$ such that $(i,j)\in E\iff P_{ij}>0$.
We can then speak of the conditional entropy rate of a weak model by establishing inequalities which hold for any assignment of probabilities which satisfies this condition.

The node sequence, $X_{[T]}=(X_1, X_2, \dots, X_T)$, is a stationary, Markovian stochastic process.
The color sequence, $Y_{[T]}=(Y_1, Y_2, \dots, Y_T)$, is also a stationary stochastic process which may or may not be Markovian \cite{GurvitsLAA2005,geiger2014lumpings}.
The entropy rate of stochastic process $X_t$ conditional on stochastic process $Y_t$ is
\begin{gather}
	H(X|Y)= \lim_{T \rightarrow \infty} \frac{1}{T} H(X_{[T]}|Y_{[T]}),\label{eq:condEntRate}
\end{gather}
where $H(X_{[T]}|Y_{[T]})$ is the classical Shannon conditional entropy for possible realizations of $X_{[T]}$ and $Y_{[T]}$ \cite{cover2012elements}.
Specifically,
\begin{align}
	H(X_{[T]}|Y_{[T]}) &= - \sum_{y \in \cal{Y}_{[T]}} p(y) \sum_{x \in \cal{X}_{[T]}}p(x|y)\log_2(p(x|y))\label{eq:condEnt}\\
	&= \sum_{y \in \cal{Y}_{[T]}} p(y) H(X_{[T]}|Y_{[T]}=y),
\end{align}
where $\cal{X}_{[T]}$ and $\cal{Y}_{[T]}$ are the sample spaces of the
random variables $X_{[T]}$ and $Y_{[T]}$, respectively.
The conditional entropy rate corresponds to the average amount of information about $X_t$ which is lost per time step when only $Y_t$ is observed.

Note that trackability is equivalent to having an infinite ``split-merge index'' as defined in the recent work on lumped Markov chains by Geiger and Temmel \cite{geiger2014lumpings}: the non-existence of differing realizable trajectories with the same start and end points is, for the irreducible case considered by Geiger and Temmel, equivalent to the unique path property which characterizes trackability.
Theorem 1 of \cite{geiger2014lumpings} shows that the conditional entropy rate $H(X|Y)$
of the state sequence process conditioned on the observation sequence is
strictly positive for a lumped Markov chain if and only if the corresponding
weak model is untrackable. In other words, for an untrackable model
there is always positive uncertainty about the process' state
however long the observation sequence happens to be.

Moreover, Geiger and Temmel also show that if there are only a bounded number of
hypotheses per observation sequence, then the conditional entropy rate is $H(X|Y)=0$.
Because their work assumes that the underlying Markov chain is irreducible (equivalent to our constraint of strong connectivity),
they are silent about the case when the hypothesis growth is unbounded
polynomial.  As shown above, a weak model that is trackable but has
unbounded worst case hypothesis growth cannot be strongly connected/irreducible.
The following theorem addresses this case.

\begin{theorem}
	A model which has (potentially unbounded) polynomial growth in the hypothesis set, $n_G(T)=O(T^k)$ with $k\geq0$, has $H(X|Y)=0$.
	\label{thm:entrate}
\end{theorem}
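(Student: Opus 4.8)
The plan is to exploit the maximum-entropy bound for a distribution supported on a finite set. For any fixed realizable color sequence $y \in \mathcal{Y}_{[T]}$, the conditional distribution $p(x|y)$ is supported on the hypothesis set $\mathcal{H}_G(y)$, whose cardinality is at most $n_G(T)$ by the definition of $n_G$ as the worst-case hypothesis count over all color sequences. Since the Shannon entropy of any distribution on a set of size $N$ is at most $\log_2 N$ (with equality for the uniform distribution), I would first establish the pointwise bound $H(X_{[T]}|Y_{[T]}=y) \leq \log_2 |\mathcal{H}_G(y)| \leq \log_2 n_G(T)$, using the fact that for single-colored models the node sequences with positive probability are exactly those consistent with $y$.

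Next I would average this bound over $y$. Because the right-hand side $\log_2 n_G(T)$ does not depend on $y$ and $\sum_{y} p(y) = 1$, the averaged conditional entropy inherits the same bound:
\begin{gather}
	H(X_{[T]}|Y_{[T]}) = \sum_{y \in \mathcal{Y}_{[T]}} p(y)\, H(X_{[T]}|Y_{[T]}=y) \leq \log_2 n_G(T).
\end{gather}

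Then I would divide by $T$ and take the limit. The polynomial growth hypothesis $n_G(T) = O(T^k)$ supplies a constant $C$ with $n_G(T) \leq C T^k$ for all large $T$, so $\log_2 n_G(T) \leq \log_2 C + k \log_2 T$. Dividing by $T$ yields a quantity that tends to zero because $(\log_2 T)/T \to 0$, hence
\begin{gather}
	H(X|Y) = \lim_{T\to\infty} \frac{1}{T} H(X_{[T]}|Y_{[T]}) \leq \lim_{T\to\infty} \frac{\log_2 C + k \log_2 T}{T} = 0.
\end{gather}
Combining this with the nonnegativity of conditional entropy, $H(X|Y) \geq 0$, forces $H(X|Y) = 0$.

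This argument is short and I do not anticipate a serious obstacle; the only points requiring care are confirming that $n_G(T)$ genuinely upper-bounds $|\mathcal{H}_G(y)|$ for every individual $y$ (immediate from $n_G$ being a maximum over color sequences) and that the maximum-entropy bound applies even though $p(x|y)$ need not be uniform. Note that the trackability case $k=0$ and the unbounded-polynomial case $k>0$ are handled by the same estimate, so this single calculation covers precisely the non-irreducible, unbounded-polynomial case that the irreducibility assumption in \cite{geiger2014lumpings} left open.
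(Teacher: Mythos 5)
Your proposal is correct and follows essentially the same route as the paper's own proof: bound $H(X_{[T]}|Y_{[T]}=y)$ by the maximum entropy $\log_2(cT^k)$ of a distribution on the polynomially-bounded hypothesis set, average over $y$, and observe that $(\log_2 c + k\log_2 T)/T \to 0$. Your added remarks on nonnegativity and on $n_G(T)$ bounding each $|\mathcal{H}_G(y)|$ are sound but do not change the argument.
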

\begin{proof}
Recall Equation~(\ref{eq:condEnt}).
If the cardinality of $\cal{X}_{[T]}$ is bounded by a polynomial $cT^k$
for every observation sequence, then the largest entropy
\begin{gather}
\sum_{x \in \cal{X}_{[T]}}
p(x|y)\log_2(p(x|y)) = H(X_{[T]}|Y_{[T]}=y)
\end{gather}
occurs for the uniform distribution over $cT^k$ elements so that
\begin{align}
H(X_{[T]}|Y_{[T]}=y) &=  - \sum_{x \in \cal{X}_{[T]}} p(x|y) \log_2(p(x|y)) \\
&\leq cT^k \cdot \frac{1}{cT^k} \log_2 (cT^k)\\
&=  \log_2 (c) + k \log_2(T).
\end{align}

Consequently, 
\begin{align}
H(X|Y) &= \lim_{T \rightarrow \infty} \frac{1}{T} H(X_{[T]}|Y_{[T]})\\
&=  \lim_{T \rightarrow \infty} \frac{1}{T} \sum_{y \in \cal{Y}_{[T]}} p(y) H(X_{[T]}|Y_{[T]}=y)\\
&\leq \lim_{T \rightarrow \infty} \frac{1}{T} \sum_{y \in \cal{Y}_{[T]}}  p(y) \left( \log_2 (c) + k \log_2(T) \right)\\
&= \lim_{T \rightarrow \infty} \frac{ \log_2 (c) + k \log_2(T)}{T} =0.
\end{align}
Thus, if hypothesis growth is bounded by a polynomial then $H(X|Y)=0$.
\end{proof}

To gain some intuition for the implications of this theorem, note that there are models for which $H(X|Y)=0$ but it is still impossible to uniquely reconstruct $X$ from $Y$.
An example of a strongly-connected model for which this is the case is shown in Figure~\ref{fig:symmetric}.
\begin{figure}
	\centering
	\includegraphics[scale=0.15]{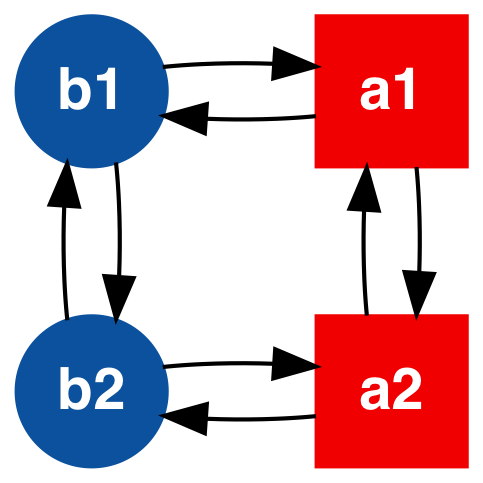}
	\caption{Model from Figure~4 of \cite{geiger2014lumpings}. The model is trackable and hence the conditional entropy rate is $H(X|Y)=0$ but it is not possible to uniquely reconstruct $X$ from $Y$ without additional constraints on the distribution of initial states.}
	\label{fig:symmetric}
\end{figure}
Because it is trackable, this model has $H(X|Y)=0$.
But if observations do not start at a known node the hypothesis set always has size two.
An intuitive way of thinking about this case is that, assuming all nodes are equally likely starting points, which of the two hypotheses is correct represents one bit of uncertainty: $H(X_{[T]}|Y_{[T]})=\SI{1}{bit}$, independent of $T$.
Because this uncertainty does not grow as $T\to\infty$, it is taken to zero by the $1/T$ term inside the limit of Equation~(\ref{eq:condEntRate}).

For a less obvious case, Theorem~\ref{thm:entrate} shows that the model in Figure~\ref{sf:ex1} has $H(X|Y)=0$ despite the fact that there is unbounded growth in the number of hypotheses.
An intuitive explanation of the fact that $H(X|Y)=0$ despite the growth in uncertainty at each step is that the number of hypotheses (and hence the conditional entropy) grows at a rate which is sufficiently slow that the $1/T$ term can still take the conditional entropy rate to zero.
In contrast to this case, Theorem 1 of \cite{geiger2014lumpings} shows that the exponential growth in the number of hypotheses for untrackable models is fast enough to result in a nonzero entropy rate.

Note that for $k=0$ (bounded growth) this theorem is more general than Theorem~1 of \cite{geiger2014lumpings} because it applies even if the model is not strongly connected and/or has periodic nodes.
That being said, the theorem is not an ``if and only if'' statement: $H(X|Y)=0$ does \emph{not} imply that $n_G(T)$ is bounded by a polynomial, which is what the following theorem addresses.
\begin{theorem}
	A model for which all recurrent nodes are aperiodic has $H(X|Y)=0$ if and only if all of the recurrent classes are trackable.
	\label{thm:H0recur}
\end{theorem}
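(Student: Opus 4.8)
The plan is to decompose the chain into its recurrent classes and reduce the conditional entropy rate of the whole model to a convex combination of the conditional entropy rates of the individual recurrent sub-chains. Since $X_{[T]}$ is assumed stationary, its initial distribution is a stationary distribution of the chain, which assigns zero probability to every transient node; hence almost every realization lives entirely within a single recurrent class and the transient nodes contribute nothing to the entropy rate. Let $R_1,\dots,R_m$ denote the recurrent classes, each of which is strongly connected and, by hypothesis, aperiodic, and let $\pi(R_i)>0$ be the stationary mass of $R_i$. Introduce the random variable $R$ indicating which recurrent class the trajectory occupies; because recurrent classes are closed, $R$ is a deterministic function of $X_{[T]}$ (indeed of $X_1$) for every finite $T$.

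The key step is the identity
\begin{gather}
	H(X_{[T]}|Y_{[T]}) = H(R|Y_{[T]}) + \sum_{i=1}^{m} \pi(R_i)\, H(X_{[T]}|Y_{[T]}, R=R_i),
\end{gather}
which follows from the chain rule together with the fact that $R$ is determined by $X_{[T]}$. Since $H(R|Y_{[T]})\le \log_2 m$ is constant, the first term vanishes after dividing by $T$ and letting $T\to\infty$. Conditioned on $R=R_i$, the process is exactly the stationary process of the lumped Markov chain restricted to the irreducible sub-chain $R_i$, so the surviving terms yield
\begin{gather}
	H(X|Y) = \sum_{i=1}^{m} \pi(R_i)\, H_{R_i}(X|Y),
\end{gather}
where $H_{R_i}(X|Y)$ is the conditional entropy rate of $R_i$. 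Because every $\pi(R_i)>0$, we have $H(X|Y)=0$ if and only if $H_{R_i}(X|Y)=0$ for all $i$.

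It then remains to characterize $H_{R_i}(X|Y)$ class by class. For the ``if'' direction, if $R_i$ is trackable it has polynomial hypothesis growth, so Theorem~\ref{thm:entrate} gives $H_{R_i}(X|Y)=0$ directly (this direction does not even require aperiodicity). For the ``only if'' direction I argue by contrapositive: if some $R_i$ is untrackable, then since $R_i$ is irreducible and aperiodic I invoke Theorem~1 of \cite{geiger2014lumpings}, which gives $H_{R_i}(X|Y)>0$; combined with $\pi(R_i)>0$ this forces $H(X|Y)>0$.

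The step I expect to be the main obstacle is justifying the middle identity rigorously, in two respects. First, I must confirm that conditioning on the recurrent class costs only $O(1)$ entropy so that it disappears in the rate, which is where closedness of the recurrent classes and the vanishing of transient mass under stationarity are essential. Second, I must verify that the conditional law given $R=R_i$ really is the stationary process of the restricted sub-chain (the renormalized restriction of $\pi$ to $R_i$ being its unique stationary distribution), so that both Theorem~\ref{thm:entrate} and the Geiger--Temmel characterization apply to it unchanged; the aperiodicity hypothesis enters precisely here, since it is what licenses the use of their theorem on each irreducible recurrent class.
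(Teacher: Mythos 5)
Your overall strategy---condition on which recurrent class the trajectory occupies, reduce $H(X|Y)$ to a weighted sum of per-class conditional entropy rates, and then apply Theorem~\ref{thm:entrate} for the ``if'' direction and Theorem~1 of \cite{geiger2014lumpings} for the ``only if'' direction class by class---is the same as the paper's. The genuine difference is how you dispatch the transient nodes. You invoke stationarity to argue that transient states carry zero probability, so that the class indicator $R$ is a function of $X_1$ and the transient phase vanishes from the problem. The paper instead allows the trajectory to begin in transient states, introduces the random entry time $T_R$ into a recurrent class, and shows separately that (i) the uncertainty in $(T_R,C)$ given $Y_{[T]}$ grows only polynomially and so contributes nothing to the rate, (ii) the entropy of the transient prefix is bounded, and (iii) the recurrent tail, after conditioning on $X_{T_R-1}$ and dropping the pre-entry observations by the Markov property, reduces to a per-class entropy rate with a possibly non-stationary initial distribution, for which the paper argues Geiger--Temmel's theorem still applies. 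Your shortcut is defensible under a literal reading of the stationarity assumption, but it empties the theorem of its intended content: the surrounding discussion (e.g., the model of Figure~\ref{fig:untrackH0}, which is untrackable precisely because of its transient nodes yet has $H(X|Y)=0$) presupposes trajectories that actually pass through transient states, which a strictly stationary process never does. The $T_R$ machinery is therefore not optional bookkeeping; it is the part of the argument your proof omits, and you correctly identified it as the main obstacle without resolving it for non-stationary starts.

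Two further points. Your ``only if'' direction requires every $\pi(R_i)>0$, but a stationary distribution of a reducible finite chain is an arbitrary convex combination of the per-class stationary distributions and may assign zero mass to some recurrent class; you assert positivity without justification, so an untrackable class of zero mass would slip through your lower bound. (The paper carries an analogous implicit assumption that each recurrent class is reached with positive probability, but you should state it as a hypothesis rather than a fact.) On the positive side, your observation that the ``if'' direction follows from Theorem~\ref{thm:entrate} alone, with no appeal to aperiodicity, is a genuine small improvement over the paper, which routes that direction through Geiger--Temmel as well.
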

\begin{proof}
	Note that the limit in the definition of the conditional entropy rate means that $H(X|Y)$ depends on the long-term, asymptotic behavior of a model.
	Therefore, the transient behavior of the model is irrelevant in determining the conditional entropy rate: while the transient nodes can generate a color sequence which makes the hypothesis set arbitrarily large, we know that, with probability one, the system will eventually end up in one of the recurrent classes.
	Intuitively, the conditional entropy rate will then be dictated by the entropy rates which are realizable by any given recurrent class.
	If the recurrent classes are all trackable, then they will all have zero conditional entropy rate.
	If any recurrent class is not trackable, then there will be a nonzero conditional entropy rate.
	
	To formalize this intuition, we start with the ``if'' part.
	Let the random variable $C_T$ correspond to which recurrent class $X_T$ is in.
	To ensure this is well-defined for all $T$, we take $C_T$ to also have a special value indicating that $X_T$ is a transient node.
	We can then define the stochastic process $C_{[T]}=(C_1, C_2, \dots, C_T)$.
	Because $C_{[T]}$ is completely determined by $X_{[T]}$, we have
	\begin{gather}
		H(X_{[T]}, C_{[T]}|Y_{[T]}) = H(X_{[T]}|Y_{[T]}).
	\end{gather}
	We can then expand this as
	\begin{gather}
		H(X_{[T]}|Y_{[T]}) = H(C_{[T]} | Y_{[T]}) + H(X_{[T]} | Y_{[T]}, C_{[T]}).
	\end{gather}
	
	Noting that we are interested in the limit as $T\to\infty$, $C_{[T]}$ will have a time $T_R$ where it transitions from transient nodes to one recurrent class, which we denote $C$.
	Therefore, any sufficiently long $C_{[T]}$ can be entirely defined by $T_R$ and $C$.
	We can then rewrite the conditional entropy as
	\begin{gather}
		H(X_{[T]}|Y_{[T]}) = H(T_R, C | Y_{[T]}) + H(X_{[T]}|Y_{[T]}, T_R, C).
	\end{gather}
	This expression has a nice interpretation: the first term corresponds to the uncertainty in which recurrent class is entered and when it is entered, while the second term corresponds to the uncertainty in the nodes given the observed colors and the specific recurrent class and transition time.
	
	Letting $X_{a:b}=(X_a, X_{a + 1}, \dots, X_{b-1}, X_b)$, we can write
	\begin{multline}
		H(X_{[T]}|Y_{[T]}, T_R, C) =\\
		H(X_{1:T_R-1}, X_{T_R:T}|Y_{1:T_R-1}, Y_{T_R:T}, T_R, C).
	\end{multline}
	We then have the following inequalities:
	\begin{multline}
		H(X_{[T]}|Y_{[T]}) \leq H(T_R, C|Y_{[T]})\\
		+ H(X_{1:T_R-1}|Y_{1:T_R-1}, Y_{T_R:T}, T_R, C) \\
		+ H(X_{T_R:T}|Y_{1:T_R-1}, Y_{T_R:T}, T_R, C)
	\end{multline}
	\begin{multline}
		\leq H(T_R, C|Y_{[T]}) + H(X_{1:T_R-1}|Y_{1:T_R-1}, T_R, C) + \\
		H(X_{T_R:T}| Y_{T_R:T}, T_R, C).
	\end{multline}
	Taking the limit to obtain the conditional entropy rate, we obtain
	\begin{multline}
		H(X|Y) \leq \lim_{T\to\infty}\frac{1}{T}\Big( H(T_R, C|Y_{[T]}) \\
		+ H(X_{1:T_R-1}|Y_{1:T_R-1}, T_R, C)\\
		+ H(X_{T_R:T}| Y_{T_R:T}, T_R, C) \Big).
	\end{multline}
	The first term has only a polynomial number of possibilities for $T_R$ and a finite, constant number of possibilities for $C$.
	Therefore, the same reasoning used to prove Theorem~\ref{thm:entrate} shows that this term will go to zero.
	The second term does not grow with $T$, and hence will be taken to zero by the $1/T$ in the limit.
	The third term can be expanded as
	\begin{multline}
		H(X_{T_R:T}| Y_{T_R:T}, T_R, C) = \\
		\sum_{\substack{t_r\in\mathcal{T_R},\\c\in\mathcal{C}}}p(t_r, c)H(X_{T_R:T}| Y_{T_R:T}, T_R=t_r, C=c),
	\end{multline}
	where $\mathcal{T_R}$ and $\mathcal{C}$ are the sample spaces of $T_R$ and $C$, respectively.
	Once the limit is taken, each term in the sum corresponds to the conditional entropy rate produced by a given recurrent class $c$.
	Theorem~1 of \cite{geiger2014lumpings} states that this entropy rate is zero for an irreducible, aperiodic, trackable model.
	Therefore, if all recurrent classes are trackable and all recurrent nodes are aperiodic, all terms in the sum are zero and $H(X|Y)=0$.
	
	To prove the ``only if'' part, start with the inequality
	\begin{multline}
		H(X_{[T]}|Y_{[T]}) \geq H(X_{[T]}|Y_{[T]}, T_R, C)\\
		=\sum_{\substack{t_r\in\mathcal{T_R},\\c\in\mathcal{C}}}p(t_r, c)H(X_{[T]}|Y_{[T]},T_R=t_r, C=c).
	\end{multline}
	For each term in the sum, we have
	\begin{multline}
		H(X_{[T]}|Y_{[T]},T_R=t_r, C=c) \geq \\
		\max\Big( H(X_{1:T_R-1}|Y_{[T]},T_R=t_r, C=c), \\
		H(X_{T_R:T}|Y_{[T]},T_R=t_r, C=c) \Big).
	\end{multline}
	When we take the limit as $T\to\infty$, the first term only involves the uncertainty in the finite set of random variables $X_{1:T_R-1}$, and hence will be taken to zero by the factor of $1/T$.
	This means that the lower bound will be determined by $H(X_{T_R:T}|Y_{[T]},T_R=t_r, C=c)$, which is the entropy of the states in the recurrent part conditioned on \emph{all} of the observations.
	Focusing on this term, we then have
	\begin{multline}
		H(X_{T_R:T}|Y_{[T]},T_R=t_r, C=c) \geq\\
		H(X_{T_R:T}|X_{T_R-1}, Y_{1:T_R-1}, Y_{T_R:T},T_R=t_r, C=c).
	\end{multline}
	Because of the conditional independence properties of hidden Markov models, we can drop the conditioning on $Y_{1:T_R-1}$: all of the information the symbols from the transient phase convey about $X_{T_R:T}$ is subsumed by $X_{T_R-1}$.
	So,
	\begin{multline}
		H(X_{[T]}|Y_{[T]},T_R=t_r, C=c) \geq\\
		H(X_{T_R:T}|X_{T_R-1}, Y_{T_R:T},T_R=t_r, C=c)=\\
		\sum_{x\in\mathcal{X}_{\mathcal{T_R}-1}}p(x|T_R=t_r, C=c)\\
		\cdot H(X_{T_R:T}|X_{T_R-1}=x, Y_{T_R:T},T_R=t_r, C=c).
	\end{multline}
	But, $H(X_{T_R:T}|X_{T_R-1}=x, Y_{T_R:T},T_R=t_r, C=c)$ is simply the entropy produced by transitions within recurrent class $c$ with an initial state distribution dictated by the edges from $x$ into $c$.
	Note that Geiger and Temmel assume the initial state distribution is equal to the stationary distribution, but they comment that their results should hold for any initial state distribution \cite{geiger2014lumpings}.
	Indeed, the proof of Theorem~1 of \cite{geiger2014lumpings} does not appear to depend on any specific choice of initial state distribution.
	Therefore, we can take the limit to obtain the conditional entropy rate and apply the results of Theorem~1 of \cite{geiger2014lumpings} to each term in the sum to conclude that the only way for $H(X|Y)>0$ is for at least one of the recurrent classes to be untrackable.
\end{proof}

This theorem implies that models which are untrackable because of pathologies in the transient nodes can still have $H(X|Y)=0$.
This can be understood in terms of the difference between the worst-case and asymptotic average behavior: trackability considers whether there is \emph{any} color sequence which can generate exponential growth in the size of the hypothesis set, whereas the conditional entropy rate captures the fact that, on asymptotic average, such sequences cannot persist infinitely long if they are only generated by transient nodes.
To illustrate this, consider the model in Figure~\ref{fig:untrackH0}.
\begin{figure}
	\centering
	\includegraphics[scale=0.15]{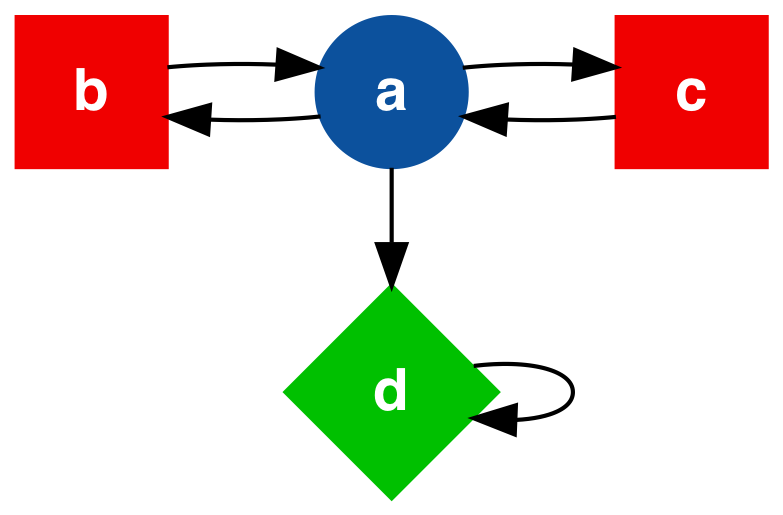}
	\caption{Untrackable model with $H(X|Y)=0$.}
	\label{fig:untrackH0}
\end{figure}
This model has intersecting cycles with the same coloring formed by the transient nodes $\{a, b, c\}$, and therefore experiences exponential worst-case growth of the hypothesis set.
But, the single recurrent node $d$ has a unique color, so the hypothesis set stops growing as soon as \greendiamond is seen.
Given our assumption that the probabilities assigned to the edges are strictly positive, this is guaranteed to happen eventually.
At this point, there are no additional bits of uncertainty added at each time step.

As a less obvious example, suppose the model in Figure~\ref{fig:untrackH0} is modified such that all nodes emit the same color.
Then it is never possible to identify when the recurrent node $d$ has been entered, and yet Theorem~\ref{thm:H0recur} implies that $H(X|Y)=0$.
In this case, the hypothesis set continues to grow exponentially, but the hypotheses which visit the transient nodes many times are increasingly less likely.
Theorem~\ref{thm:H0recur} then indicates that the rate at which such hypotheses become unlikely is sufficient for the conditional entropy rate to be zero.

\section{Summary}
\label{sec:conc}

This paper has presented several new analytic results regarding the ability to reconstruct the sequence of nodes visited by an agent traversing a trackable weak model.
It was found that the node reconstruction problem is substantially more tractable in models that are strongly connected compared to models that have transient nodes.
This is a promising result because trackability imposes fewer constraints on the model's coloring than the other observability classes which offer stronger performance guarantees \cite{ChilenskiTaxon2018}, and hence is more likely to be attainable in practice.
For strongly-connected trackable models, the worst-case size of the hypothesis set is bounded by an easily-computed constant.
Furthermore, when unambiguous real-time tracking is lost because of a same-colored out-neighbor, it will always eventually be possible to determine which branch was taken.
Finally it was shown that the conditional entropy rate, $H(X|Y)$, vanishes if and only if all recurrent classes are trackable.
The implications of these theorems were elaborated in terms of well-known mean absorption and recurrence times of Markov chains that can
be used to estimate how long certain events such as unique node identification can take on average.

These new results identify deep relationships between structural properties of weak models and lumped Markov chains, and their trackability and entropy rates.
Consequently, the results of this paper can be used to concretely inform the design of trackable systems as well as the tracking algorithms used in
applications such as computer program execution monitoring with out-of-band electromagnetic emissions measurements \cite{ChilenskiSPIE2018}.

\section*{Acknowledgements}
Distribution Statement ``A'' (Approved for Public Release, Distribution Unlimited)

The research effort depicted was sponsored by the Air Force Research Laboratory (AFRL) and the Defense Advanced Research Projects Agency (DARPA) under the Leveraging the Analog Domain for Security (LADS) program under contract number FA8650-16-C-7622. In particular, we thank Dr.\ Angelos Keromytis and Mr.\ Ian Crone, the past and present DARPA program managers of LADS, for their encouragement and support throughout the program. 

This research was developed with funding from the Defense Advanced Research Projects Agency (DARPA).

The views, opinions and/or findings expressed are those of the author and should not be interpreted as representing the official views or policies of the Department of Defense or the U.S.\ Government.

\bibliographystyle{IEEEtran}
\bibliography{trackable}

\begin{IEEEbiographynophoto}{Mark Chilenski}
received the BS degree in aeronautical and astronautical engineering from the University of Washington in 2010 and the PhD degree in nuclear science and engineering from the Massachusetts Institute of Technology in 2016. He is a senior scientist at Systems \& Technology Research LLC. His research interests include machine learning, Bayesian inference, and cybersecurity.
\end{IEEEbiographynophoto}

\begin{IEEEbiographynophoto}{George Cybenko}
received his B.Sc. and Ph.D.
degrees in Mathematics from the University of Toronto and Princeton. He
is currently the Dorothy and Walter Gramm Professor
of Engineering at Dartmouth. His research interests include cyber
security, advanced machine learning algorithms and information
deception.
\end{IEEEbiographynophoto}

\begin{IEEEbiographynophoto}{Isaac Dekine}
received the BS and MS degrees in electrical and computer engineering from Carnegie Mellon University in 2006. He is a senior engineer at Systems \& Technology Research LLC. His research interests include RF system design, signal processing, and cybersecurity.
\end{IEEEbiographynophoto}

\begin{IEEEbiographynophoto}{Piyush Kumar}
received the Master of Science degree in Physics from the Indian Institute of Technology Kharagpur in 2001, MS degree in Physics from the University of Chicago in 2004 and the PhD degree in Physics from the University of Michigan Ann Arbor in 2007. He is a lead scientist at Systems \& Technology Research LLC. His research interests include applications of probabilistic methods to problems in physics and engineering, machine learning, and graph theory.
\end{IEEEbiographynophoto}

\begin{IEEEbiographynophoto}{Gil Raz}
received a bachelor's degree in electrical engineering from the Technion -- Israel Institute of Technology in 1988 and a PhD in electrical engineering (minor in mathematics) from the University of Wisconsin -- Madison in 1998. He is chief scientist at Systems \& Technology Research LLC. His research interests include applied mathematics and statistics for solving problems in multiple application areas.
\end{IEEEbiographynophoto}

\end{document}